\newtheorem{define}{Definition}
\newtheorem{lemma}{Lemma}
\newtheorem{assume}{Assumption}
\newcommand{\R}{\mathbb R}
\newcommand{\Z}{\mathbb{Z}}
\newcommand{\bmx}[1]{\begin{bmatrix}#1\end{bmatrix}} 
\newcommand{\pth}[1]{\left(#1\right)} 
\DeclarePairedDelimiter{\ceil}{\lceil}{\rceil}
\DeclarePairedDelimiter{\floor}{\lfloor}{\rfloor}
\DeclarePairedDelimiter{\abs}{\lvert}{\rvert}
\let\oldceil\ceil
\def\ceil{\@ifstar{\oldceil}{\oldceil*}}
\let\oldfloor\floor
\def\floor{\@ifstar{\oldfloor}{\oldfloor*}}
\let\oldnorm\norm
\def\norm{\@ifstar{\oldnorm}{\oldnorm*}}
\let\oldabs\abs
\def\abs{\@ifstar{\oldabs}{\oldabs*}}
\title{\LARGE \bf
Triangle-Decomposable Graphs for Isoperimetric Robots
}
\author{Nathan Usevitch$^{1}$, Isaac Weaver$^{1}$, and James Usevitch$^{2}$
\thanks{*This work was partially supported by the Utah NASA Space Grant Consortium, (NASA Grant \#80NSSC20M0103), and through the 2024 NASA BIG Idea Challenge}
\thanks{$^{1}$Brigham Young University, Department of Mechanical Engineering}%
\thanks{$^{2}$Brigham Young University, Department of Electrical Engineering}
}
\begin{document}

\maketitle
\thispagestyle{empty}
\pagestyle{empty}

\begin{abstract}

Isoperimetric robots are large scale, untethered inflatable robots that can undergo large shape changes, but have only been demonstrated in one 3D shape- an octahedron. These robots consist of independent triangles that can change shape while maintaining their perimeter by moving the relative position of their joints. We introduce an optimization routine that determines if an arbitrary graph can be partitioned into unique triangles, and thus be constructed as an isoperimetric robotic system. We enumerate all minimally rigid graphs that can be constructed with unique triangles up to 9 nodes (7 triangles), and characterize the workspace of one node of each these robots. We also present a method for constructing larger graphs that can be partitioned by assembling subgraphs that are already partitioned into triangles. This enables a wide variety of isoperimetric robot configurations.

\end{abstract}

\section{Introduction}

Robotic systems will be capable of an increasing number of tasks if they can change shape to perform a variety of tasks and safely interact with people. One type of robot with the potential for large shape change and human-safe interaction is the isoperimetric robot, first introduced in \cite{usevitch2020untethered}, and with an example shown in Fig.~\ref{fig:isoperimetric}. This robot consists of inflated fabric beams as the primary structural members, with robotic rollers that pinch the tube, reducing the local bending stiffness to create a region of the tube that acts as a rotational joint. These rollers can drive up and down the tube, simultaneously lengthening one edge of the tube and shortening another. The overall internal volume inside the tubes is conserved, meaning that for inflated beams, no source of compressed air is needed. Computationally, the resulting structure approximates a computer mesh defined by edges and nodes, and allows the robot to change shape. The soft structure of the robot also gives it inherent compliance that makes the robot human safe. 

The only demonstrated 3D shape of an isoperimetric robot has been an octahedron, composed of 4 individual tubes that make up triangles \cite{usevitch2020untethered}.  The fundamental building block for these robots in 3 dimensions is an individual triangle. In a triangle, the axis of rotation formed by each of the roller modules are guaranteed to remain parallel. Robots can be formed by connecting different triangles together at spherical joints. The unique constraints that the robot be made of triangles creates a key question: what are the shapes of robots that can be formed from individual triangles such that each edge in the overall graph is the part of exactly one triangle? This paper presents techniques to answer this question and determine many new types of robots that can be created. We have constructed two of these robots as a demonstration in Fig.~\ref{fig:isoperimetric}.

\begin{figure}[tb]
\centering
\includegraphics[width=.85\columnwidth]{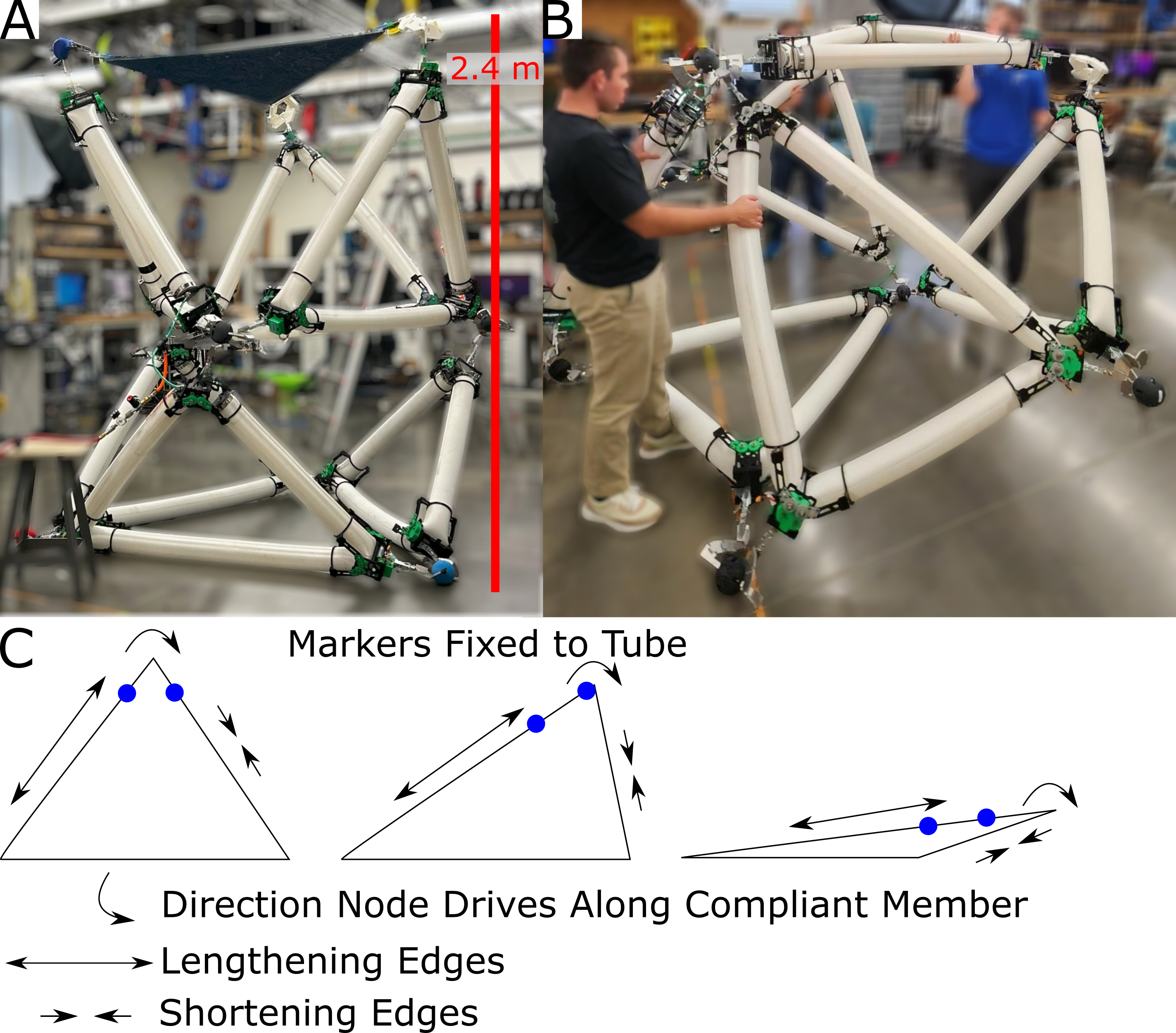} 
\caption{(A) An isoperimetric robot that is constructed from 7 triangles (6 inflated triangles, and a rigid triangle on top that serves as an interaction surface. (B) An isoperimetric robot in the shape of a hexagonal bipyramid, formed from 6 triangles. (C) Illustration of the operating principle of the isoperimetric robot.  Each triangle changes shape as the rollers move along the inflated tube, changing the position of the vertex, but maintaining a constant perimeter.}
\label{fig:isoperimetric}  
\vspace{-10 pt}
\end{figure}


This paper makes the following contributions: 

\begin{itemize}
\item We present a novel algorithm that determines if a graph can be partitioned into individual triangles, and then finds a valid partition. 
\item We enumerate enumerate all possible minimally rigid graphs up to 9 nodes (7 triangles) that can be partitioned into individual triangles, and all partitions of those graphs. 
\item We present a constructive method for combining partitioned graphs together to form larger structures that themselves are triangle-decomposable
\item We give an analysis of the reachable set of points by minimally rigid isoperimetric robots up to 9 nodes. 
\end{itemize}

\section{Related Work}

\subsection{Modular Robotics }
A robot composed of modular and reconfigurable elements would allow robotic systems to be able to adapt to a wide variety of tasks and environments \cite{yim2007modular_grandchallenge,seo2019modular}. Often the robots themselves are the primary building blocks of the structure \cite{rubenstein2012kilobot, romanishin2013_Mblocks}, while in other cases robots can form a shape from passive material (\cite{petersen2011termes}).   

The class of modular robots to which the Isoperimetric robots most closely belongs is truss robots, often referred to as variable geometry trusses. Truss robots consist of rigid links that can change their length connected at passive joints.  The tetrobot project used as its base unit a tetrahedron, and developed hardware and dynamic controllers for this class of robots \cite{hamlin1996tetrobot, hamlin1997tetrobot, lee2002tetrobot_dynamic, lee1999tetrobot_dynamics}. The tetrobot project enumerates a number of different shapes that the tetrahedral robots can take, including an algorithm for chaining tetrahedon elements together to create arbitrarily long chains. Hardware construction and controller design of these tetrahedral based structures has also been considered for lunar operations \cite{curtis2007tetrahedral_for_space, abrahantes2007gait_12tet}. There have been several other hardware constructions of these truss style robots \cite{lyder2008_odin, zagal2012deformable_Oct}. In \cite{spinos2021topological_VTT}, the authors determine which shapes the truss robot can take while it reconfigure itself without user intervention. Truss robots are also closely related to tensegrity robots \cite{shah2022tensegrity}, which have been proposed for space exploration \cite{bruce2014superball, sabelhaus2015sys_designSUPERball}, climbing through ducts \cite{friesen2014ductt}, and other tasks.  Determining how to combine tensegrity elements into large numbers of shapes has also been a topic of research \cite{aloui2018ellular_tensegrity_generation, wang2020tenseg_topology_design}.  In the case of the isoperimetric robot, the fundamental element is a triangle, which poses the problem of what shapes can be formed from triangles. While other similar other works that have described and enumerated tensegrity, tetrahedral, and truss configurations, this work describes and enumerates isoperimetric robots.

\subsection{Graph Decomposition}
This work also draws on prior contributions in graph theory and discrete optimization. The problem of partitioning a graph into triangles is a special case of an edge partition problem, which has been shown to be NP-complete \cite{holyer1981np_edge_part}. While extensive research has focused on characterizing rigid graphs, a full combinatorial characterization of rigidity in three dimensions in an open problem. In \cite{bartzos2018maximal}, the authors examine the number of embedding of minimally rigid graphs, and in doing so, enumerate all minimally rigid graphs up to 8 nodes. In constructing the graphs, the authors note that two operations, dubbed H1 and H2 steps and shown in Fig.~\ref{fig:Hstep}, are sufficient to construct all Geiringer graph with $\leq$ 12 vertices.  

\begin{figure}[tb]
\centering
\includegraphics[width=.4\columnwidth]{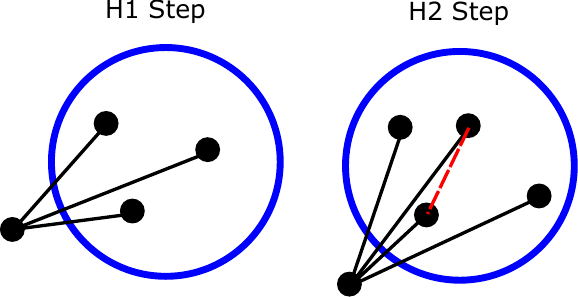} 
\caption{Two different Hennenberg steps utilized to construct graphs. In an H1 step, a new node is added with 3 connecting edges. For an H2 step, a new node with four new edges is added, and an edge between the connecting nodes is deleted. }
\label{fig:Hstep}  
\vspace{-10 pt}

\end{figure}

\section{Mathematical Definition of an Isoperimetric Robot}


We define the state of an isoperimetric robot using information about the connectivity of the edges (an undirected graph), the position of the vertices (the embedding) and how the edges are divided into triangles (the triangle partition).  We denote the graph  $\mathcal{G} = (\mathcal{V}, \mathcal{E})$ where $\mathcal{V}$ is the set of vertices or nodes and $\mathcal{E}$ is the set of edges. We define the embedded graph as a framework $(\mathcal{V}, \mathcal{E}, \bm p)$ where $(\mathcal{V}, \mathcal{E})$ is the graph and $\bm p \triangleq \bmx{\bm p_1^\intercal & \bm p_2^\intercal & \cdots & \bm p_n^\intercal}^\intercal$, $\bm p_i \in \R^3$ $\forall i$, is a set of points in Euclidean 3-dimensional space.

\begin{assume}
All graphs considered in this paper do not have self-loops and do not have duplicate edges. More precisely, \((i,j) \in \mathcal{E} \implies i \neq j\) and 
any (undirected) edge \((i,j) \in \mathcal{E}\) is unique.
\end{assume}

\subsubsection{Triangle Partition}
The overall goal is to partition the set of edges $E(\mathcal{G})$ into subsets, where each subset induces a subgraph isomorphic to $K_3$ (a triangle).
To do this, we define the notions of binary-valued \emph{triangle indicator matrices} and \emph{triangle partition matrices}. For this definition, \(M^j\) indicates the \(j\)th column of the matrix \(M\).

\begin{define}
    \label{def:triangleindicator}
    The matrix \(T \in \{0,1\}^{|E| \times N_T})\), \(N_T \in \Z_{\geq 0}\) is a triangle indicator matrix for a graph \(\mathcal{G} = (\mathcal{V}, \mathcal{E})\) if every column \(T^j\) is the indicator vector for the edge set of a subgraph of \(\mathcal{G}\) isomorphic to \(K_3\).
\end{define}

\begin{define}
    \label{def:trianglepartitionmatrix}
    The matrix \(T \in \{0,1\}^{|E| \times |E|/3}\) is a triangle partition matrix for a graph \(\mathcal{G} = (\mathcal{V},\mathcal{E})\) if all of the following conditions are satisfied:
    \begin{enumerate}
        \item \(T\) is a triangle indicator matrix, and
        \item \(T \vec{\bm 1} = \vec{\bm 1}\). Equivalently, the edge sets associated with each column of \(T\) form a partition of \(\mathcal{E}\).\footnote{The dimensions \(T \in \{0,1\}^{|E| \times |E|/3}\) for a triangle partition matrix follow from every edge being included in exactly one triangle (with three edges per triangle).}
    \end{enumerate}
\end{define}

Definitions \ref{def:triangleindicator}, \ref{def:trianglepartitionmatrix} imply that the entries of a triangle indicator or triangle partition matrix can be defined as 
\begin{align}
    T_{i}^j = \begin{cases}
    1 & \text{ if edge $i$ is in triangle $j$,} \\
    0 & \text{ otherwise.}
    \end{cases}
\end{align}
Each column $T^j$ of $T$ represents a different triangle and each row $T_i$ of $T$ represents an edge of the graph $\mathcal{G}$. 
Note that Definitions \ref{def:triangleindicator}, \ref{def:trianglepartitionmatrix} trivially imply that \(T^\intercal \vec{\bm 1} = 3\vec{\bm 1}\), meaning that each triangle contains exactly three edges.


The kinematics of an isoperimetric robot relating the changes in the edge lengths with motions of the nodes are described in detail in \cite{usevitch2020untethered}. For each graph, we can construct a rigidity matrix, $R(x) \in \R^{|\mathcal{E}| \times 3|\mathcal{V}|}$ can be used to determine whether or not the framework is infinitesimally rigid using the following result:

\begin{lemma} 
[\cite{connelly1993rigidity}]
   \label{lem:infinitesimallyrigid}
    The framework $(\mathcal{V}, \mathcal{E}, \bm p)$ is infinitesimally rigid in $\mathbb{R}^m$ if and only if its rigidity matrix $R(x)$ has rank $3n-6$, where $n$ is the number of nodes.
\end{lemma}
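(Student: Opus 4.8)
The plan is to translate the geometric notion of infinitesimal rigidity into a dimension count on $\ker R(\bm p)$ and then read off the rank via rank--nullity. First I would recall the construction of the rigidity matrix: each edge $(i,j) \in \E$ imposes the length constraint $\nrm{\bm p_i - \bm p_j}^2 = c_{ij}$ for a constant $c_{ij}$, and $R(\bm p)$ is (up to a factor of $2$) the Jacobian of the map $g:\R^{3n}\to\R^{|\E|}$ collecting these squared lengths. Concretely, the row of $R(\bm p)$ for edge $(i,j)$ has $(\bm p_i - \bm p_j)^\T$ in the $i$-block, $(\bm p_j - \bm p_i)^\T$ in the $j$-block, and zeros elsewhere. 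Hence a stacked velocity $\bm u = \bmx{\bm u_1^\T & \cdots & \bm u_n^\T}^\T$ lies in $\ker R(\bm p)$ if and only if $(\bm p_i - \bm p_j)^\T(\bm u_i - \bm u_j) = 0$ for every edge, i.e. $\bm u$ is an infinitesimal flex. Since the framework is infinitesimally rigid exactly when every infinitesimal flex is trivial, the entire argument reduces to pinning down the subspace of trivial flexes inside $\ker R(\bm p)$.

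Next I would identify and count the trivial flexes. An infinitesimal rigid motion of $\R^3$ assigns node velocities $\bm u_i = S\bm p_i + \bm t$, where $\bm t \in \R^3$ is a translation and $S \in \R^{3\times 3}$ is skew-symmetric (an infinitesimal rotation). Substituting gives $\bm u_i - \bm u_j = S(\bm p_i - \bm p_j)$, so $(\bm p_i - \bm p_j)^\T(\bm u_i - \bm u_j) = (\bm p_i - \bm p_j)^\T S (\bm p_i - \bm p_j) = 0$ because $v^\T S v = 0$ for skew $S$; thus every such $\bm u$ automatically lies in $\ker R(\bm p)$. The crux is to show that the linear map $(S,\bm t) \mapsto (S\bm p_i + \bm t)_{i=1}^n$ from the $6$-dimensional space of pairs $(S,\bm t)$ into $\R^{3n}$ is injective, which holds precisely when the points $\{\bm p_i\}$ affinely span $\R^3$. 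Under this (generic, and here implicitly assumed) non-degeneracy condition the trivial-flex space has dimension exactly $3+3 = 6 = \binom{m+1}{2}$ with $m=3$.

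Finally I would close with rank--nullity. Because the trivial flexes always give $\dim \ker R(\bm p) \geq 6$, infinitesimal rigidity---equality of $\ker R(\bm p)$ with the trivial-flex space---is equivalent to $\dim \ker R(\bm p) = 6$. Applying $\dim \ker R(\bm p) = 3n - \operatorname{rank} R(\bm p)$ then yields $\operatorname{rank} R(\bm p) = 3n - 6$, as claimed. I expect the genuine obstacle to be the injectivity and dimension count for the trivial flexes: verifying that the affine-span condition is exactly what makes the rotation-plus-translation parametrization faithful, and flagging that this non-degeneracy is what the clean value $3n-6$ silently requires (a framework collapsed into a plane or a line would admit a larger trivial-flex space and hence a strictly smaller maximal rank).
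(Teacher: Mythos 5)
The paper never proves this lemma: it imports it verbatim from the rigidity literature (the citation to Connelly), so there is no internal proof to compare against. Your argument is the standard one underlying that citation --- identify $\ker R(\bm p)$ with the space of infinitesimal flexes, check that the rigid motions $(S,\bm t)$ (skew $S$ plus translation) give trivial flexes inside the kernel, define infinitesimal rigidity as equality of the two spaces, count dimensions, and finish by rank--nullity --- and that overall structure is sound and is exactly how the cited result is established.

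However, the step you yourself single out as the crux contains a genuine error. Injectivity of the map $(S,\bm t) \mapsto (S\bm p_i + \bm t)_{i=1}^n$ does \emph{not} hold ``precisely when'' the points affinely span $\R^3$: it holds whenever the affine span has dimension at least $2$. Indeed, if $S\bm p_i + \bm t = \bm 0$ for all $i$, then $S(\bm p_i - \bm p_1) = \bm 0$, so the kernel of $S$ contains the span of all the differences; if that span is at least $2$-dimensional, $S$ has rank at most $1$, and since a skew-symmetric matrix has even rank, $S = 0$ and hence $\bm t = \bm 0$. Consequently your closing parenthetical is false in the planar case: a framework collapsed into a plane still has a trivial-flex space of dimension exactly $6$, and the triangle $K_3$ --- planar by necessity --- is infinitesimally rigid in $\R^3$ with rank $3 = 3n-6$. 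This is not a pedantic distinction in this particular paper, where the triangle is the fundamental building block of every robot considered. Only the collinear degeneracy (affine span of dimension $\leq 1$) actually breaks the dimension count, shrinking the trivial-flex space below $6$. What \emph{is} true, and perhaps what you had in mind, is that a planar framework with $n \geq 4$ nodes is never infinitesimally rigid in $\R^3$; but the reason is that assigning arbitrary out-of-plane velocities produces an $n$-dimensional space of flexes of which only $3$ dimensions are trivial, not that the trivial-flex space itself grows.
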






\section{Partitioning a Graph into distinct Triangles} \label{Sec:Partition}

We present three separate algorithms to partition a graph into distinct triangles: an exhaustive search algorithm, an integer programming routine with a pre-enumeration step, and an integer programming algorithm.

For an arbitrary graph to be decomposed into unique triangles, it must satisfy the following necessary (but not sufficient) conditions: 

\begin{itemize}
    \item The degree of each node in the graph must be even
    \item The total number of edges in the graph must be divisible by 3
\end{itemize}

\subsection{Exhaustive Search Routine}
If a graph meets the necessary conditions, a na\"ive approach to determining if a graph can be partitioned into triangles is the following:

\begin{itemize}
\item Identify all triangles in the graph
\item Select all possible combinations of triangles
\item Evaluate each combination of triangles to determine if each edge in the graph appears exactly once. 
\end{itemize}

The triangles triangles in a graph are precomputed using  algorithm \ref{alg_triangles}.  In this algorithm, we loop over each node in the graph, and look for nodes that have an intersection in their neighbor sets, where the neighbor set of node $i$ is denoted $N(i)$

\begin{algorithm} 
\caption{Triangle Pre-Enumeration}
\begin{algorithmic}
    \State $triangles \leftarrow \emptyset$
    \For{$i =0:n$}  \Comment{Loop over each node}
        \For{$v \in N(i)$}   \Comment{Loop over the nodes neighbors}
            \If{$v < i$} \Comment{Avoid duplicate checks}
                \For{$w \in N(i) \cap N(v)$} \Comment{Shared neighbors}
                    \If{$w > v$} \Comment{Avoid duplicate triangles}
                        \State $triangles \leftarrow triangles \cup \{ (i, v, w) \}$
                    \EndIf
                \EndFor
            \EndIf
        \EndFor
    \EndFor
    \State \Return $triangles$
\end{algorithmic}
\label{alg_triangles}
\end{algorithm}

The computational complexity of this algorithm, and the upper bound on the number of triangles is given in \cite{suri2011countingtriangles} as

\begin{equation}
    N_{Triangles}= \frac{1}{3}\sum_{i=1}^n \frac{d_i (d_i-1)}{2}
\end{equation}

where $d_i$ is the degree of node $i$. The number of possible combinations depends on the number of triangle in the graph. If there are $m$ candidate triangles, and $3k$ edges in the graph, the number of possible combinations to check is given by the binomial coefficient: 

\begin{equation}
N_{combinations}=\frac{n!}{(n-k)! k!}.
\end{equation}

Thus the number of triangles in a graph grows quadratically with the degree of the nodes of the graph, but the number of possible combinations grows with the factorial of the number of triangles in the graph. This procedure is tractable for small graphs, but exhaustively enumerating and searching the possible combinations rapidly exceeds our computational resources for large, dense graphs. For this reason, we pursue an integer programming approach.  

\subsection{Integer Programming Formulations for Determining Triangulated Rigidity}

We present two integer programming techniques for decomposing a graph into triangles. In the first, each triangle in the graph is identified as an initial step. In the second, the decomposition is solved end-to-end by an integer convex program without pre-enumeration.

\subsubsection{Triangle Pre-Enumeration ILP}

The triangle pre-enumeration technique consists of two steps. The first step precomputes all possible triangles within the input graph, using the procedure in Algorithm \ref{alg_triangles}. The second step uses integer linear programming solver to partition edges into a valid graph decomposition.

For the pre-enumeration technique, we encode all possible triangles with a triangle indicator matrix \(T\) as per Definition \ref{def:triangleindicator}.
Since the matrix \(T\) contains all possible triangles within the graph and is not guaranteed to be a triangle partition matrix, it follows that some triangles within \(T\) may have intersecting edge sets.

To obtain a triangle partition matrix from \(T\) as per Definition \ref{def:trianglepartitionmatrix}, all edges must be included in exactly one triangle. Mathematically this can be expressed as the constraint \(Tx^* = 1\), where the \(j\)th entry of the indicator vector \(x^* \in \{0,1\}^{N_T}\) is 1 if triangle \(j\) is included in the partition and 0 if it is excluded. The final partition matrix is then \(T^* \triangleq \bmx{T^{j_1} & T^{j_2} & \cdots & T^{j_m}}\) where \(x_{j_k} = 1\ \forall k=1,\ldots,m\).


Finding the indicator vector \(x^*\) can be posed as an integer linear programming (ILP) feasibility problem as follows:
\begin{alignat}{2}
\begin{aligned}
    x^* = &\underset{x \in \{0,1\}^{N_T}}{\arg\min}&\hspace{1em} 0 
    & &\text{s.t.}& \hspace{1em}Tx = \bm 1 \label{eq:preenumeratedILP}
\end{aligned}
\end{alignat}
Any feasible point to this ILP is a valid indicator vector specifying which columns of \(T\) form a triangle partition matrix.

The total runtime of the Triangle Pre-Enumeration ILP method is the runtime of Algorithm \ref{alg_triangles} added with the runtime of the ILP in \eqref{eq:preenumeratedILP}. The ILP in \eqref{eq:preenumeratedILP} can be solved by any standard optimization solver such as Gurobi \cite{gurobi}, MOSEK \cite{mosek}, HiGHS \cite{huangfu2018parallelizing}, or SCIP \cite{BolusaniEtal2024OO}.

\subsubsection{End-to-End IQCQP}

The second integer programming technique involves computing a triangle partition matrix directly from the graph incidence matrix without a pre-enumeration step.
Let $d= \bmx{d_1 & \cdots & d_{|E|}}$ be the vector of degrees of the line graph $\mathcal{L}(\mathcal{G})$.
Let $D$ denote the incidence matrix of $\mathcal{G}$, and let $|D|$ denote the entrywise absolute value of the matrix $D$.
Let $\mathcal{L}(\mathcal{G})$ denote the line graph of $\mathcal{G}$, and let $L$ be the Laplacian matrix of $\mathcal{L}(\mathcal{G})$.
We will show in this section that this problem can be expressed as the following integer quadratically-constrained quadratic program (IQCQP):
\begin{alignat}{3}
&\begin{aligned}
    T^* = \underset{T \in \{0,1\}^{|E|\times |E|/3}}{\arg\min}\hspace{1em} &&& 0
\end{aligned} \nonumber \\
&\hspace{3em}\begin{aligned}
    \text{s.t.} &&& T^\intercal x = 3 \vec{\bm 1} \\
    &&&T x = \vec{\bm 1} \\
    &&&(T^j)^\intercal L (T^j) - d^\intercal T^j \leq -6\ \forall j=1,\ldots,N_T^*, \\
    &&& |D| T^j \leq 2\ \forall j=1,\ldots,N_T^* \label{eq:IQCQP}
\end{aligned}
\end{alignat}
The optimization variables of this IQCQP are the entries of the matrix \(T \in \{0,1\}^{|E| \times |E|/3}\).
The first constraint of \eqref{eq:IQCQP} ensures the columns of \(T\) have at least 3 non-zero entries, and the second constraint enforces each edge to be present in exactly 1 column.
The third and fourth sets of constraints enforce each column of \(T^*\) to represent an indicator vector for an edge set of a subgraph of \(\mathcal{G}\) isomorphic to \(K_3\).
The third set of constraints enforces that all edges in each triangle are adjacent to each other.
This is captured in the following inequality:
\begin{align}
    \label{eq:inequality}
    (T^j)^\intercal L (T^j) - d^\intercal T^j \leq -6,\ \forall j \in \{1,\ldots, N_T^*\}.
\end{align}
On its own, the condition in \eqref{eq:inequality} is necessary but not sufficient to ensure that a given triangle \(T^j\) is isomorphic to \(K_3\).\footnote{As a counterexample, consider a 4-node graph with edges \((1,2),\ (1,3),\ (1,4)\).}
The second set of conditions enforces that no single node is part of all three edges:
\begin{align}
    \label{eq:incidence_inequality}
    |D| T^j \leq 2 \vec{\bm 1}.
\end{align}

\begin{lemma}
    An induced triangle subgraph \(T^j\) with edges $j_1,j_2,j_3 \in E(\mathcal{G})$ is isomorphic to $K_3$ if and only if inequalities \eqref{eq:inequality} and \eqref{eq:incidence_inequality} are simultaneously satisfied.
\end{lemma}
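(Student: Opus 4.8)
The plan is to reduce each inequality to a transparent combinatorial statement about the three selected edges and then show that only their conjunction singles out $K_3$. Throughout, write $x \triangleq T^j \in \{0,1\}^{|E|}$ for the column in question, with support $S = \{j_1,j_2,j_3\}$; the hypothesis (equivalently the first constraint of \eqref{eq:IQCQP}, $\vec{\bm 1}^\intercal T^j = 3$) fixes $x$ as the indicator of exactly three \emph{distinct} edges of $\mathcal{G}$, distinctness coming from Assumption 1. The first step is to evaluate the quadratic form in \eqref{eq:inequality}. Writing the line-graph Laplacian as $L = \operatorname{Diag}(d) - A$, where $A$ is the adjacency matrix of $\mathcal{L}(\mathcal{G})$ and $d$ its degree vector, and using $x_a^2 = x_a$ for binary entries so that $x^\intercal \operatorname{Diag}(d) x = d^\intercal x$, I would obtain
\[
(T^j)^\intercal L (T^j) - d^\intercal T^j = -\,x^\intercal A\, x = -2\,e_S,
\]
where $e_S$ is the number of edges of $\mathcal{L}(\mathcal{G})$ with both endpoints in $S$, i.e.\ the number of pairs among $j_1,j_2,j_3$ that share a vertex in $\mathcal{G}$. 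Hence \eqref{eq:inequality} reads $-2e_S \le -6$, i.e.\ $e_S \ge 3$; since a three-vertex graph has at most $\binom{3}{2}=3$ edges, this forces $e_S = 3$, meaning the three edges are \emph{pairwise adjacent}.

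The second step is a short classification of three distinct, pairwise-adjacent edges. Using Assumption 1 to exclude coincident edges, a case split on whether all three share a common vertex shows there are exactly two possibilities: either the edges form a triangle on three vertices, or they meet at a single common vertex and form a star $K_{1,3}$ (precisely the configuration flagged in the footnote). This is the conceptual heart of the argument and explains why \eqref{eq:inequality} cannot suffice on its own: both the triangle and the star give $e_S = 3$.

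To separate the two cases I would interpret \eqref{eq:incidence_inequality}. Because $|D|$ is the unsigned incidence matrix, $(|D|x)_v$ equals the number of selected edges incident to vertex $v$, that is, the degree of $v$ in the subgraph with edge set $S$; the constraint $|D|x \le 2\vec{\bm 1}$ caps every such degree at $2$. The star has a center of degree $3$ and is therefore excluded, whereas the triangle has all three vertex degrees equal to $2$ and is admitted. Assembling the two directions then finishes the proof: if the subgraph is $K_3$, the edges are pairwise adjacent (so \eqref{eq:inequality} holds, with equality) and each vertex degree is $2$ (so \eqref{eq:incidence_inequality} holds); conversely, \eqref{eq:inequality} forces pairwise adjacency, the classification leaves only the triangle and the star, and \eqref{eq:incidence_inequality} eliminates the star, leaving $K_3$.

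I expect the main obstacle to be the quadratic-form reduction in the first step: one must carefully identify $d$ and $L$ as the \emph{line-graph} degree vector and Laplacian so that the $\operatorname{Diag}(d)$ contribution cancels $d^\intercal T^j$ exactly, yielding the clean count $-2e_S$; any mismatch there breaks the equivalence. The subsequent edge-triple classification is elementary but carries the logical weight, since it is exactly what makes the incidence constraint \eqref{eq:incidence_inequality} indispensable.
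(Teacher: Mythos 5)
Your proof is correct and takes essentially the same route as the paper's: the same reduction of the quadratic constraint \eqref{eq:inequality} to a count of adjacent edge pairs (the paper's identity \eqref{eq:k3sum} is exactly your $-2e_S$), the same reading of $|D|T^j$ in \eqref{eq:incidence_inequality} as the vector of vertex degrees of the selected subgraph, and the same triangle-versus-star classification (the star $K_{1,3}$ is precisely Case 1 of the paper's contrapositive argument, and the existence of two non-adjacent edges otherwise is its Case 2). The only difference is presentational --- you state both constraints as exact combinatorial equivalences and then classify, whereas the paper argues sufficiency directly and necessity by contrapositive with two cases --- so no substantive comparison is needed.
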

\begin{proof}
\textbf{Sufficiency:} Suppose \(T^j\) is the indicator vector for the edge set of a triangle isomorphic to \(K_3\).
\(T^j\) can be written as $T^j = I^{j_1} + I^{j_2} + I^{j_3}$ where $I^j$ represents the $j$th column of the identity matrix. Observe that
    \begin{align}
        (I^{i})^\intercal L I^{k} = L_i^k = \begin{cases}
            d_i & \text{ if $i = k$}, \\ 
            -1 & \text{ if $i \neq k$ and edge $i$ is adjacent} \\ & \text{ to edge $k$}, \\
            0 & \text{ otherwise.}
        \end{cases}
        \label{eq:ILIcases}
    \end{align}
    Also observe that $d^\intercal T^j = d^\intercal (I^{j_1} + I^{j_2} + I^{j_3}) = d_{j_1} + d_{j_2} + d_{j_3}$. From the symmetry of $L$ (recall that we are considering undirected graphs) it follows that
    \begin{align}
        \label{eq:k3sum}
        (T^j)^\intercal L (T^j) - d^\intercal T^j = 2(L_{j_1}^{j_2} + L_{j_1}^{j_3} + L_{j_2}^{j_3}).
    \end{align}
    Since \(T^j\) is the indicator vector for the edge set of a triangle isomorphic to \(K_3\), all three edges \(j_1, j_2, j_3\) are adjacent to each other and the RHS of \eqref{eq:k3sum} is equal to -6, satisfying constraint \eqref{eq:inequality}.

    Next, since the triangle represented by \(T^j\) is isomorphic to \(K_3\), there exist three nodes \(n_1, n_2, n_3\) such that $|D|I^{j_1} = I^{n_1} + I^{n_2}$, $|D|I^{j_2} = I^{n_2} + I^{n_3}$, and $|D|I^{j_3} = I^{n_3} + I^{n_1}$.
    This follows from the definition of the incidence matrix \(D\). We therefore have \(|D|T^j = |D|(I^{j_1} + I^{j_2} + I^{j_3}) = 2(I^{n_1} + I^{n_2} + I^{n_3})\), which implies that \(|D|T^j \leq 2 \vec{\bm 1}\) for all \(j\). Constraint \eqref{eq:incidence_inequality} is therefore satisfied.

\textbf{Necessity:} We prove the contrapositive.
Suppose that \(T^j\) is the indicator vector for the edge set of a subgraph that is \emph{not} isomorphic to \(K_3\). The objective is to show that either constraint \eqref{eq:inequality} or \eqref{eq:incidence_inequality} is not satisfied. Since the subgraph represented by \(T^j\) contains three edges due to the first constraint in \eqref{eq:IQCQP}, the subgraph not being isomorphic to \(K_3\) implies that the number of nodes within the subgraph is either 4, 5, or 6.\footnote{This follows because a subgraph with three nodes and three edges must trivially be isomorphic to \(K_3\).}

\emph{Case 1:} Suppose there exists a node \(n^*\) in the subgraph represented by \(T^j\) that belongs to all edges. Since there are only 3 edges, this implies that the number of nodes in the subgraph is 4. An example is given by the subgraph with edges (1,2), (1,3), (1,4) with \(n^* = 1\). These graphs are clearly non-isomorphic to \(K_3\) since they must contain four nodes.
It follows that for the three edges \(j_1,j_2,j_3\) in \(T^j\) we have $|D|I^{j_1} = I^{n_1} + I^{n_2}$, $|D|I^{j_2} = I^{n_1} + I^{n_3}$, and $|D|I^{j_3} = I^{n_1} + I^{n_4}$.
Therefore \(|D|T^j = 3I^{n_1} + I^{n_2} + I^{n_3} + I^{n_4}\), which implies that \(\max \pth{|D|T^j} = 3\) and therefore constraint \eqref{eq:incidence_inequality} is not satisfied.

\emph{Case 2:} Suppose that \(T^j\) represents a subgraph not isomorphic to \(K_3\) and \(\max \pth{|D|T^j} < 3\). Since constraint \eqref{eq:incidence_inequality} is feasible, we demonstrate that constraint \eqref{eq:inequality} is not satisfied.
Observe that, for this case, there must exist two non-adjacent edges \(j_1, j_2\) in the subgraph. This can be easily verified by noting that the only possible subgraphs where all three edges are adjacent are \(K_3\) or a graph matching the conditions of Case 1; i.e., \(\max \pth{|D|T^j} = 3\). Since there exist two non-adjacent edges, by \eqref{eq:ILIcases} we have \((I^{j_1})^\intercal L I^{j_2} = L_{j_1}^{j_2} = 0\). By \eqref{eq:k3sum} we therefore have 
\begin{align}
    \begin{aligned}
        (T^j)^\intercal L (T^j) - d^\intercal T^j &= 2(L_{j_1}^{j_2} + L_{j_1}^{j_3} + L_{j_2}^{j_3}) \\
        &= 2(0 + L_{j_1}^{j_3} + L_{j_2}^{j_3}) > -6.
    \end{aligned}
\end{align}
Therefore constraint \eqref{eq:inequality} is not satisfied, which concludes the proof.
\end{proof}

\section{Composition Algorithms}

\subsection{Enumeration from minimally rigid graphs}
In this section we address the problem of enumeration:  what are all of the minimally rigid graphs, up to a given number of nodes, that can be decomposed into triangles? 

In \cite{bartzos2018maximal}, the authors provide an enumeration of all minimally rigid graphs based on Hennenberg steps up to 8 nodes. Extending this enumeration to an exhaustive enumeration of graphs with 9 nodes requires a prohibitive amount of computational resources. However, the only graphs that can have valid partitions are those that have all nodes with even degree. If the final step is an H1 step, the last node will always have had  odd degree, and cannot be partitioned.  The only viable partitions for 9 nodes graphs will be encountered if the degree of each node is even.  Therefore, we extend their enumeration to 9 nodes by taking all known graphs with 8 nodes, and evaluating all 9 node graph that result from taking an H2 step.

Using the enumeration of minimally rigid graphs, we then apply the exhaustive search algorithm from the previous section to identify which are decomposable. The total numbers of the candidate graphs are given in table \ref{tab:Enumeration} for each number of nodes. The graphs for which a successful partition are found are shown in Fig.~\ref{fig:enumerate}.  We also list the workspace of one node of each of these graphs, which we will describe in \ref{Sec:Workspace}. In order to describe these graphs, we utilize a convention used by other researchers to describe arbitrary graphs. We take the upper triangular portion of the adjacency matrix, interpret that sting of 1's and 0's as a binary number, and label the graph with the number that corresponds to the value of the binary number converted to base 10.  Thus the number assigned to each graph uniquely describes the graph's adjacency matrix. 

Interestingly, there are some graphs for which multiple partitions of the same graph are possible. Graph 26622, an octahedron, has two possible partitions, one with a labeled triangle on the top, and another with a labeled triangle as the base. There are three possible partitions of two stacked octahedrons (graph 60243677150). While these graphs are identical, the motion of the resulting robot is different due to the different constraints resulting from the triangle partitions. This is evidenced by different sizes of their workspaces. 

\begin{table}

\centering
\begin{tabular}{||c c c c||}
 \hline
 \# of Nodes & MR Graphs & Satisfy NC & Partitions \\ [0.5ex] 
 \hline\hline
 6 & 4 & 1 & 2 \\ 
 \hline
 7 & 26 & 2 & 1 \\
 \hline
 8 & 374 & 6 &  3\\
 \hline
 9 & - & 60 &  13\\ [1ex] 
 \hline

\end{tabular}

\label{tab:Enumeration}
\caption{\textup{The number of minimally rigid graphs for different number of nodes, how many of those graphs satisfy the necessary conditions, and the total number of partitions (note that some graphs that satisfy the necessary conditions can be partitioned in multiple ways).}}
\vspace{-10 mm}
\end{table}

\begin{figure*}[tb!]
\centering
\includegraphics[width=1.6\columnwidth]{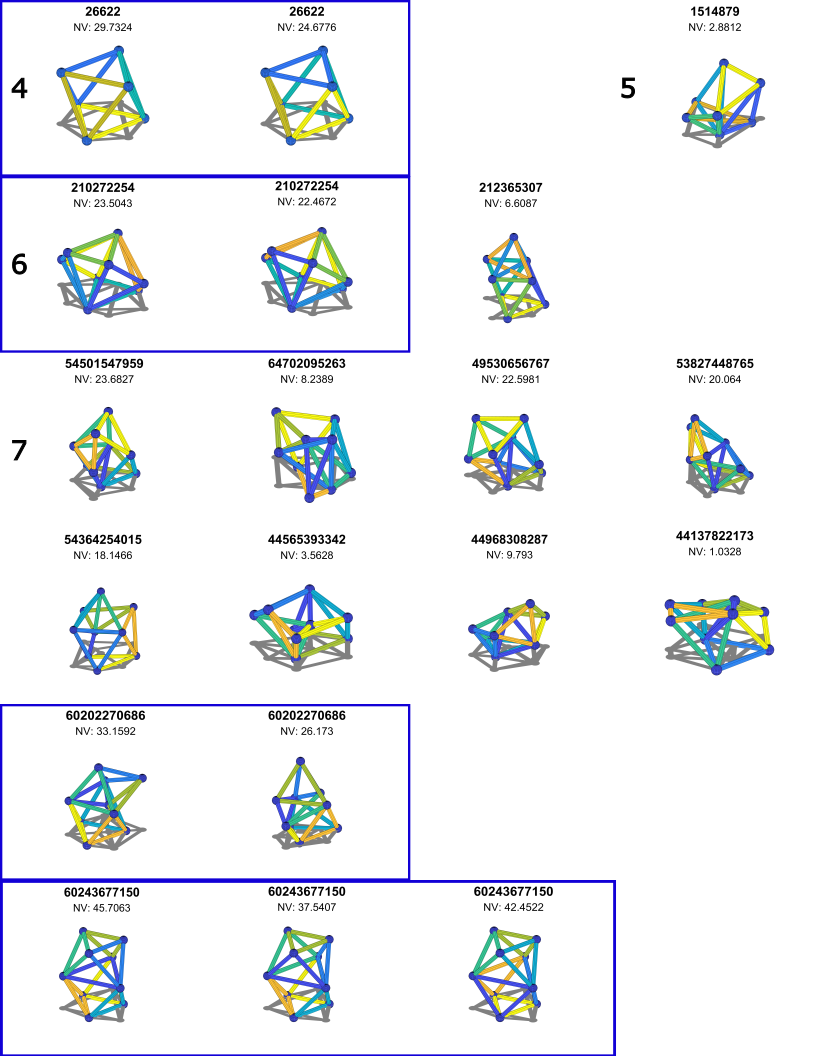} 
\caption{All partitions of all minimally rigid graphs with 9 or fewer nodes that can be partitioned into triangles. Note that some graphs have multiple possible partitions. Graphs are grouped according to the number of component triangles.  Graphs that are identical but have different partitions are grouped in blue boxes.  For each graph, the normalized workspace volume NV of the top node is given.  }
\label{fig:enumerate}  
\vspace{-4mm}
\end{figure*}

\paragraph{Embedding the Graphs} Once the graphs are constructed, we must then find an embedding (assign values to the nodes coordinates of the graph) so that we can examine them as robotic systems. In this study we find embeddings using a multidimensional scaling \cite{cox2000multidimensional}, as implemented by the Matlab function ``mdscale."  

One challenge is that the embedding that result from the multidimensional scaling for some of these graphs occur at configurations where the graph is not infinitesimally rigid. Mathematically, the robot is not rigid when the rigidity matrix, $R$, loses row rank. We quantify the rigidity of the graphs with the worst case rigidity index, $\lambda_{WCR}$ formulated in \cite{zelazo2012rigidity,zelazo2015decentralized}:  

\begin{equation}
\label{eq:WCR}
  \lambda_{WCR}=\frac{\lambda_7}{\sum_{i=1}^{3n} \lambda_i}=\frac{\lambda_7}{ \text{tr}(R(x)^TR(x))}=\frac{\lambda_7}{\sum_{i=1}^{N_L} (L(x)_i)^2} 
\end{equation}

where $\lambda_7$ is the seventh smallest eigenvalue of the matrix $R(x)^TR(x)$. In order to find embeddings that are infinitesimally rigid, we use the algorithm in \ref{alg_embed}. With this approach, we generate a large number of embedding of the graph with some induced randomness, and then select the one that has the largest worst-case rigidity index.   

\begin{algorithm} \label{alg_embed}
\caption{Determine a Embedding}
\begin{algorithmic}
    \State $WCR \leftarrow 0$
    \For{$i =0:n_{trials}$}  \Comment{Number of Embeddings to Try}
        \State $D=0$ \Comment{Initialize Distance Matrix}
        \For{$i \in 1:n$}   
               \For{$j \in [j:n]$} 
                   \If{$A(i,j) == 1$}  \Comment{ Connected nodes}
                       \State $D(i,j)=rand()$ 
                   \Else 
                       \State $D(i,j)=10$ \Comment{Disjoint nodes}
                   \EndIf
             \EndFor
        \EndFor
        \State x=mdscale(D) \Comment {Compute embedding}
        \State WCR=wcr(x) \Comment{Eq. \ref{eq:WCR}}
        \If{$WCR>WCR_{max}$}
            \State $x_{max}=x$
        \EndIf
    \EndFor
    \State \Return $x_{max}$
\end{algorithmic}
\end{algorithm}

\subsection{Constructive Methods for Combining Partitioned Graphs}

We present a method for combining two infinitesimally rigid graphs that are partitioned into edge-disjoint triangles, into a single graph that is also infinitesimally rigid and decomposes into unique triangles.  Conceptually, this involves deleting a labeled triangle from one graph, and merging the 3 nodes of that triangle with three nodes of the other graph 


Let  $\mathcal{G}_1=(\mathcal{U}_1, \mathcal{E}_1)$, with node locations $ x_1= (p_1, ... ,p_{n_1})$ and $\mathcal{G}_2(\mathcal{V}_1, \mathcal{E}_2)$ with node positions $x_2= (q_1 ... q_{n_2})$ be infinitesimally rigid frameworks in \(\R^3\). Let there be a triangle (a subgraph isomorphic to $K_3$) in $(\mathcal{G}_1$ composed of nodes $u_1, u_2, u_3$, and edges $e_{12}, e_{23}, e_{13}$. Let $\mathcal{G}_2$ have 3 nodes positioned at the same relative distances as the triangle in $\mathcal{G}_1$, meaning that a homogeneous transform $M$ exists such that $[p_{1}, p_{2}, p_3]=M[q_{1}, q_{2}, q_{3}]^T$. We define the graph $\mathcal{G}_{1'}=\mathcal{G}_1-\{e_{12}, e_{23}, e_{13} \}$, meaning that the graph $\mathcal{G}_{1'}$ is the graph resulting from deleted the connecting edges of the triangle.

\begin{lemma}
     Let $\mathcal{G}_{combine}$ be the graph formed by joining graphs \(\mathcal{G}_1\) and \(\mathcal{G}_2\), i.e., $\mathcal{G}_{combine} = \mathcal{G}_{1'} \cup \mathcal{G}_{2}  \text{ with } \{v_1 \sim u_1, v_2 \sim u_2, v_3 \sim u_3\}$, where $\sim$ denotes combining the nodes of the two graphs.
     Let $x_c=[p_1...p_{n_1},M q_4...Mq_{n_2}]$ be the node positions for \(\mathcal{G}_{combine}\).
     Then the framework $(\mathcal{G}_{combine}, x_c)$ is infinitesimally rigid. 
\end{lemma}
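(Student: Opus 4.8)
The plan is to argue directly about the space of infinitesimal flexes rather than counting ranks of rigidity matrices. Recall that $(\mathcal{V},\mathcal{E},\bm p)$ is infinitesimally rigid in $\R^3$ exactly when its only infinitesimal flexes are the trivial (rigid-body) ones, i.e.\ velocity assignments of the form $\dot r = \omega \times r + t$ for a common angular velocity $\omega$ and translation $t$; by Lemma~\ref{lem:infinitesimallyrigid} this is equivalent to the rank-$(3n-6)$ condition. First I would normalize the configuration: since $M$ is a homogeneous (rigid) transform, applying it to $\mathcal{G}_2$ changes neither its infinitesimal rigidity nor the length of any of its edges, and by the matching condition $[p_1,p_2,p_3]=M[q_1,q_2,q_3]^\intercal$ it places the three special nodes $v_1,v_2,v_3$ at positions $Mq_1=p_1,\ Mq_2=p_2,\ Mq_3=p_3$. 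Hence in the configuration $x_c$ each glued pair $v_k\sim u_k$ occupies a single point $p_k$, and I may analyze one velocity field on $\mathcal{G}_{combine}$.

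Take an arbitrary infinitesimal flex $\{\dot r_i\}$ of $(\mathcal{G}_{combine},x_c)$; the goal is to show it is trivial. The first observation is that every edge of $\mathcal{G}_2$ survives in $\mathcal{G}_{combine}$, so the restriction of $\{\dot r_i\}$ to the nodes of $\mathcal{G}_2$ is an infinitesimal flex of $\mathcal{G}_2$; by infinitesimal rigidity of $\mathcal{G}_2$ this restriction is a single rigid-body field $\dot r_i = \omega_2 \times (Mq_i) + t_2$. In particular the three glued vertices move under one common rigid motion. The second observation concerns $\mathcal{G}_1$: the only edges of $\mathcal{G}_1$ absent from $\mathcal{G}_{combine}$ are the deleted triangle edges $e_{12},e_{23},e_{13}$, and these join exactly the three glued vertices, which we have just shown move rigidly. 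Because a rigid-body field preserves all pairwise distances to first order, the three triangle-edge constraints are \emph{automatically} satisfied by $\{\dot r_i\}$. Thus the restriction of $\{\dot r_i\}$ to the nodes of $\mathcal{G}_1$ satisfies every edge constraint of the \emph{full} graph $\mathcal{G}_1$, so it too is a flex of an infinitesimally rigid framework and is therefore a rigid-body field $\dot r_i = \omega_1 \times p_i + t_1$.

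It remains to glue these two rigid-body fields into one. On the three shared vertices we have $\omega_1 \times p_k + t_1 = \omega_2 \times p_k + t_2$ for $k=1,2,3$, so the difference field $(\omega_1-\omega_2)\times p_k + (t_1-t_2)$ vanishes at $p_1,p_2,p_3$. Here I use that the triangle is non-degenerate, i.e.\ $p_1,p_2,p_3$ are non-collinear; a rigid-body velocity field in $\R^3$ is uniquely determined by its values at three non-collinear points, so $\omega_1=\omega_2$ and $t_1=t_2$. Consequently $\{\dot r_i\}$ is a single global rigid-body field over all of $\mathcal{G}_{combine}$, which is precisely the statement that $(\mathcal{G}_{combine},x_c)$ admits only trivial flexes and is infinitesimally rigid.

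The two restriction steps are routine; the step that genuinely needs care is the final matching, where non-collinearity of the glued triangle is essential. If those three points were collinear, the shared rotation about their common line would be undetermined (the difference $\omega_1-\omega_2$ could be any vector along that line, with $t_1-t_2$ chosen accordingly), and the two sub-motions could fail to agree. I would therefore record non-degeneracy of the triangle as an explicit hypothesis; it holds for any physical triangle with positive area, and indeed a collinear triple cannot be rigid on its own since its midpoint admits a perpendicular flex.
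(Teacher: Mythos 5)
Your proof is correct, but it takes a genuinely different route from the paper's. The paper argues on the matrix side: it writes $R_1$ in block form with the deleted triangle's rows $R_C$ split out, stacks $R_2$ on top of the remaining rows of $\mathcal{G}_{1'}$ to form $R_{combine}$, notes that the three deleted rows lie in the row space of $R_2$ (because $(\mathcal{G}_2,x_2)$ is infinitesimally rigid, an edge between two of its vertices adds no rank), and then counts $\operatorname{rank}(R_{combine}) = (3n_1-6)+(3n_2-6)-3 = 3n_C-6$, concluding via Lemma~\ref{lem:infinitesimallyrigid}. You instead work on the kernel side: you take an arbitrary flex of the glued framework, restrict it to each half, use infinitesimal rigidity of each half to get two rigid-body fields (your observation that a rigid-body field on the three glued vertices automatically satisfies the deleted triangle's constraints is exactly the flex-language counterpart of the paper's row-dependence observation), and then match the two fields at three non-collinear points. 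Each approach buys something: the paper's is shorter and speaks directly to the rank criterion it cites, but its final rank count is asserted rather than derived --- making it rigorous (why is the rank exactly additive minus three, rather than smaller?) essentially requires your kernel argument or an equivalent row-space analysis. Your version is fully justified step by step, and it surfaces a hypothesis the paper leaves implicit: the glued triangle must be non-degenerate. That hypothesis is genuinely needed --- with three collinear shared points the two halves can counter-rotate about the common line, so the lemma as stated would fail --- and it is not automatically implied by infinitesimal rigidity of $\mathcal{G}_1$ (a rigid framework can contain a degenerate $K_3$, e.g.\ a node placed on the midpoint of an edge and braced by other connections), so recording it explicitly, as you do, is a real improvement over the paper's statement.
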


\begin{proof}
  The rigidity matrix of ($\mathcal{G}_1$, $x_1$) is 
    \begin{equation}
     R_1 = \begin{bmatrix}R_C & 0 \\ R_{1C} &  R_{11}\end{bmatrix}.
\end{equation}
where $R_C$ is the rigidity matrix of the deleted triangle. Let $R_2$ be the rigidity matrix of $(\mathcal{G}_2, x_2)$  The rigidity matrix of $(\mathcal{G}_{combine}, x_c)$ is

\begin{equation}
    R_{combine} = \begin{bmatrix}R_2 &  0  \\ R_{1C} &  R_1 \end{bmatrix}
\end{equation}

The deleted edges corresponding to the triangle $R_c$  are linearly dependent with the rows of $R_2$, because the framework $(\mathcal{G}_2, x_2)$ is infinitesimally rigid.  Thus the rank of $R_{combine}$ is $(3n_1-6 + 3n_2-6 -3)$. As $n_C = n_1+n_2-3$, the rank of $R_{combine}$ is $3n_C-6$. Thus the framework $(\mathcal{G}_{combined}, x_c)$   is infinitesimally rigid by Lemma \ref{lem:infinitesimallyrigid}.
\end{proof}

If the triangle at which the two graphs are joined is part of the partition in each subgraph, then the overall graph can also be partitioned

\begin{figure}[htb]
\centering
\includegraphics[width=.5\textwidth]{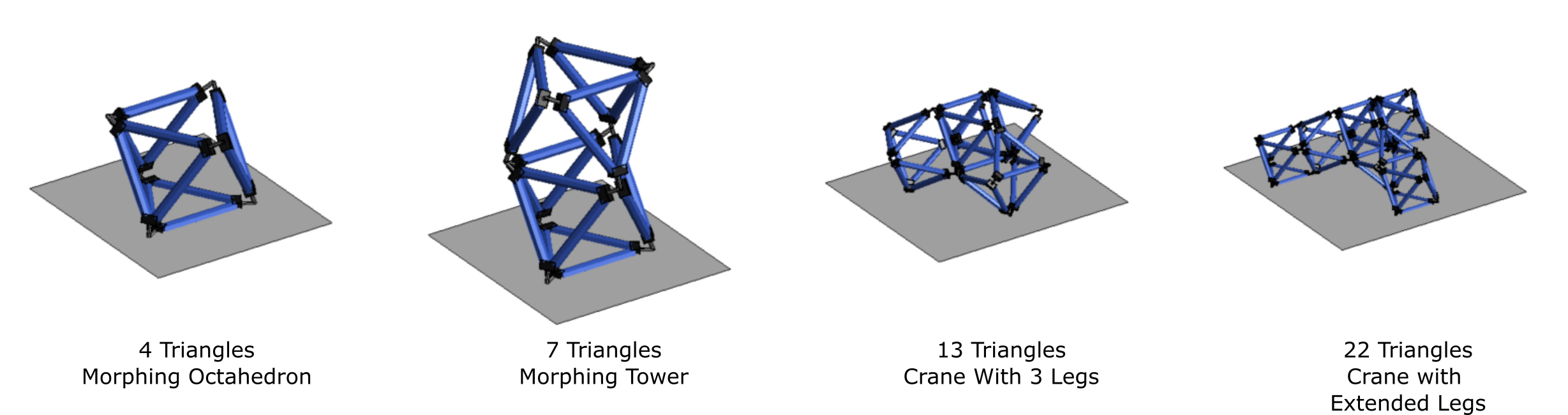} 
\caption{Shapes formed by combining octahedral units into chains and branching shapes }
\label{fig:construct_shapes} \vspace{-4 mm}
\end{figure}

Using this constructive algorithm, we can combine any of the graphs presented in Fig.\ref{fig:enumerate}. We have constructed several different graphs using this constructive algorithm, with the results shown in Fig.~\ref{fig:construct_shapes}.

\section{Results: Computational Timing}

We evaluate the computational performance of the three partition algorithms presented in Section \ref{Sec:Partition}. We generate a test set of random graphs by starting with an octahedron, identifying three random nodes in the graph, and merging the base graph with a new octahedron using the procedure in Lemma 3. Denoting the number of nodes in the graph as $|\mathcal{V}|$, 20 random graphs were generated for each value of $|\mathcal{V}|$ in $[9, 15, 21, \ldots, 297]$. For each graph, the Exhaustive Search Routine, Triangle Pre-Enumeration ILP, and End-to-End IQCQP algorithms were run and the runtimes were recorded. Due to computational limitations and the NP-hardness of integer programming, a 100-second time limit was imposed on the runtimes. Integer programs were solved using the Gurobi optimizer \cite{gurobi}. All experiments were conducted on an AMD Ryzen Threadripper Pro 5975wx workstation processor.

A plot with the results is shown in Figure \ref{fig:runtimes}. The exhaustive search reaches the memory limit beginning with graphs of 18 nodes, while the end-to-end ICQCQP reaches the time limit beginning at 24 nodes. However, the algorithm with pre-enumeration enables computation up to 297 nodes in approximately 1 second, indicating that this method is tractable for large graphs. 

\begin{figure}
    \centering
    \includegraphics[width=\columnwidth]{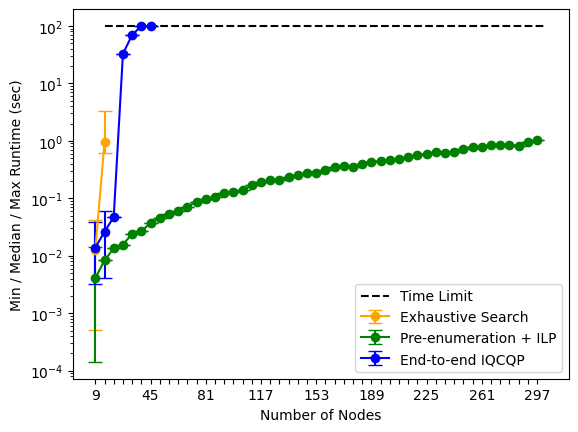}
    \caption{Plot comparing runtimes for exhaustive search, pre-enumeration ILP, and end-to-end IQCQP. Center dots show the median runtime, while upper and lower whiskers show min / max runtimes respectively. All experiments ran with a time limit of $100$ seconds. Only two entries for exhaustive search are shown since the algorithm encountered out-of-memory errors after that point. End-to-end IQCQP runtimes beyond 27 nodes are omitted due to runs hitting the timeout limit.}
    \label{fig:runtimes}
    \vspace{-2mm}
\end{figure}

\section{Results: Workspace Characterization} \label{Sec:Workspace}
\begin{figure}[tb]
\centering
\includegraphics[width=.5\textwidth]{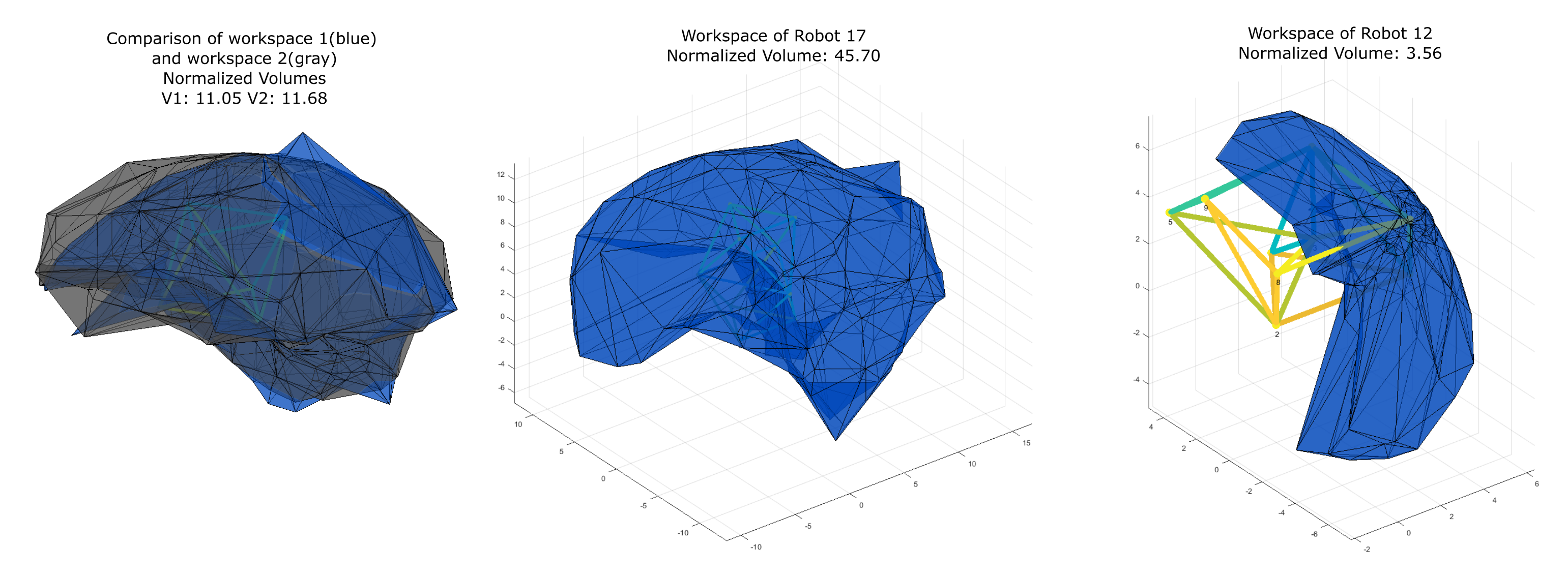} 
\caption{Visualization of three robot workspaces: (1) Overlap of two robots with identical graphs, (2) largest achievable workspace, and (3) smallest constrained workspace.}
\label{fig:workspace_vis}  
\vspace{-2mm}
\end{figure}

We now characterization the motion of the new robot shapes given in Fig.~\ref{fig:enumerate}. We provide a quantitative comparison by determining the reachable workspace of a single vertex of each robot, while three other ``base nodes" are constrained to be stationary. The support nodes we selected are shown as the base nodes in Fig.~\ref{fig:enumerate}.

Determining the workspace of each robot requires solving the inverse kinematics, meaning that we specify the motion of the controlled nodes of the graph in operational space, and determine how the robot must actuate to achieve this motion. We phrase the inverse kinematics as the following optimization problem:

\begin{align}  
   &\min_{\dot{x}_i}    
   \begin{aligned}[t]
        \| R(x) \dot{x}\|^2  \label{Motion_Primitive}
   \end{aligned} \\
   &\text{subject to} \notag \\
   &\quad \begin{bmatrix} A_i \\ C_i \\ 
 T^T R(x) \end{bmatrix} \dot{x}_i=\begin{bmatrix} b_i \\ 0 \\ 0 \end{bmatrix}  \label{eq:lin_control} \\
 &\quad D\dot{x}_i \geq 0 \label{eq:constraint}
\end{align}
where $\dot{x}_i$ is the velocity of all nodes of the robot at time $i$. 
In this case, $C \dot{x} = 0$ constrains the base nodes of the robot to be stationary, $A \dot{x} = b$ constrains one of the nodes to move in a specified target direction. $T^T R(x) \dot{x}=0$ is the constraint that the perimeter of each triangle remains constant, where $T$ is the triangle indicator matrix of the graph.

We also include the inequality constraint $D\dot{x}_i \geq 0$. We move the robot while enforcing the following constraints:
\begin{itemize}
\item Individual edge lengths must remain above a minimum length of 0.2
\item The robot's worse case rigidity index must remain above a threshold of 0.005, as defined in eq.~\ref{eq:WCR}.
\end{itemize}
After each time step we evaluate these constraints. If violated, we return to the previous time step and enforce the gradient of the constraint as a linear constraint of the form $D\dot{x}_i \geq 0$, using the same technique used in \cite{usevitch2020locomotion}.  

To calculate the workspace, we generated a set of 200 points on the surface of a sphere of radius 6, significantly larger than the robot can reach. We then compute the sphere's triangulation. Each robot was positioned within the sphere and commanded to move its designated movement node in a straight line to each target point on the sphere’s surface. For each attempt, the final position reached by the movement node was recorded until the robot could no longer continue due to the constraints. These final positions were used to compute the volume of the triangulation, to give an approximate volume of the workspace. This procedure is visualized in the video attachment.

To allow comparisons between robots with different numbers and lengths of edges, the workspace volume was normalized by dividing it by the cube of the longest edge length of each robot. This normalization controls for differences in scale and initial edge length, enabling a fair comparison of workspace across robots with differing geometries and triangle counts.

Robots with identical graphs exhibit variability in their workspaces due to differences in how they are partitioned into triangles. This can be seen in the boxed groupings in Figure \ref{fig:enumerate}. The most drastic case is that of graph 60243677150 where the workspace ranges from 37.54 to 45.7. Similar differences occur in graph 26622. Robots with edges that span through the middle of the graph tend to have limited ranges of motion. These graphs in their initial configuration are closer to singularities. Graph 26622 has an initial WCRI of 1.67, which is the highest of all the graphs we explored. Graph 44565393342 had the smallest initial WCRI of 0.01. The initial WCRI cannot predict the volume of the workspace. Robots based on the same graph have the same WCRI regardless of orientation or configuration, yet have different workspace volumes. Though 60243677150 had the largest workspace, it has a comparably small initial WCRI of 0.47. 

This study has allowed for comparison of the different workspaces of these robots. However, the movement strategy is not necessarily optimal, as there may be other movements that avoid singular configurations and can reach points outside of the current workspace. In addition, future work could examine different embeddings (node positions of the graphs) that may enable improved behavior.

\section{Conclusion}
In this paper we have expanded the number of robots that can be built from isoperimetric triangles. This allows for an increased number of robot shapes and types. These robots may form a viable candidate for space exploration, as they can stow in small volume when deflated, and then inflate to form large structures that can support substantial loads.  In future work, new hardware development could lead to nodes for the isoperimetric robot that allow for the compliant members to bend along multiple axes. This would remove the constraint that each tube must remain a planar triangle, and replace it with the constraint that the graph must be decomposed into Euler paths.  This would increase the number of candidate robot configurations.

\bibliographystyle{IEEEtran}
\bibliography{IEEEabrv,bibliography}

\end{document}